\title{Vector Quantization-Based Regularization for Autoencoders}
\author{\large \textbf{Hanwei Wu\textsuperscript{\rm 1, 2}} and
\textbf{Markus Flierl\textsuperscript{\rm 1}} \\
\{hanwei, flierl\}@kth.se \\
\textsuperscript{\rm 1}KTH Royal Institute of Technology, Stockholm, Sweden\\
\textsuperscript{\rm 2}Research Institutes of Sweden\\
Stockholm, Sweden}
\DeclareMathOperator*{\argmin}{arg\min}
\DeclareMathOperator*{\argmax}{arg\max}
\newtheorem{theorem}{Theorem}
\begin{document}
\maketitle
\begin{abstract}
Autoencoders and their variations provide unsupervised models for learning low-dimensional representations for downstream tasks. Without proper regularization, autoencoder models are susceptible to the overfitting problem and the so-called posterior collapse phenomenon. In this paper, we introduce a quantization-based regularizer in the bottleneck stage of autoencoder models to learn meaningful latent representations. We combine both perspectives of Vector Quantized-Variational AutoEncoders (VQ-VAE) and classical denoising regularization methods of neural networks. We interpret quantizers as regularizers that constrain latent representations while fostering a similarity-preserving mapping at the encoder. Before quantization, we impose noise on the latent codes and use a Bayesian estimator to optimize the quantizer-based representation. The introduced bottleneck Bayesian estimator outputs the posterior mean of the centroids to the decoder, and thus, is performing soft quantization of the noisy latent codes. We show that our proposed regularization method results in improved latent representations for both supervised learning and clustering downstream tasks when compared to autoencoders using other bottleneck structures.
\end{abstract}
\section{Introduction}
An important application of autoencoders and their variations is the use of learned latent representations for downstream tasks. In general, learning meaningful representations from data is difficult since the quality of learned representations is usually not measured by the objective function of the model. The reconstruction error criterion of vanilla autoencoders may result in the model memorizing the data. The variational autoencoders (VAE) \cite{kingma:14:iclr} improves the representation learning by enforcing stochastic bottleneck representations using the reparameterization trick. The VAE models further impose constraints on the latents by minimizing a Kullback–Leibler (KL) divergence between the prior and the approximate posterior of the latent distribution. However, the evidence lower bound (ELBO) training of VAE does not necessarily result in meaningful latent representations as the optimization cannot control the trade-off between the reconstruction error and the information transfer from the data to the latent representation \cite{pmlr-v80-alemi18a}. On the other hand, the VAE training is also susceptible to the so-called “posterior collapse” phenomenon where a structured latent representation is mostly ignored and the encoder maps the input data to the latent representation in a ``random'' fashion \cite{Lucas2019}. This is not favorable for downstream applications since the latent representation loses its similarity relation to the input data.

 Various regularization methods have been proposed to improve the latent representation learning for the VAE models. \cite{Higgins:17:iclr}\cite{burgess2018understanding} enforce stronger KL regularization on the latent representation in the bottleneck stage to constrain the transfer of information from data to the learned representation. Denoising methods \cite{8253482}\cite{NIPS2018_7692}\cite{bengio:AAAI:17} encourage the model to learn robust representations by artificially perturbing the training data. On the other hand, conventional regularization methods may not solve the posterior collapse problem. \cite{Lucas2019} empirically shows that posterior collapse is caused by the original marginal log-likelihood objective of the model rather than the evidence lower bound (ELBO). As a result, modifying the objective function ELBO of VAE as \cite{Higgins:17:iclr}\cite{burgess2018understanding} may have limited effects on preventing the posterior collapse. One potential solution is the vector-quantized variational autoencoder (VQ-VAE) \cite{oord:17:nips} model. Instead of regularizing the latent distribution, VQ-VAE provides a latent representation based on a finite number of centroids. Hence, the capability of the latent representation can be controlled by the number of used centroids which guarantees that a certain amount of information is preserved in the latent space.

In this paper, we combine the perspectives of VQ-VAE and noise-based approaches. We inject noise into the latent codes before the quantization in the bottleneck stage. We assume that the noisy observations are generated by a Gaussian mixture model where the means of the components is represented by the centroids of the quantizer. To determine the input of the autoencoder decoder, we use a Bayesian estimator and obtain the posterior mean of the centroids. In other words, we perform a soft quantization of the latent codes in contrast to a hard assignment as used in vanilla VQ-VAE. Hence, we refer to our framework as soft VQ-VAE. Since our focus is on using autoencoders to extract meaningful low-dimensional representations for other downstream tasks, we demonstrate that the latent representation extracted from our soft VQ-VAE models are effective in subsequent classification and clustering tasks in the experiments.
\section{Bottleneck Vector Quantizer}
\label{quantization}
\subsection{Vector Quantization in Autoencoders}
\label{subsec:vq-vae}
 Here we first give a general description of the autoencoder model with vector quantized bottleneck based on the VQ-VAE formulation. The notational conventions in this work are as follows: Boldface symbols such as $\mathbf{x}$ are used to denote random variables. Nonboldface symbols $x$ are used to denote sample values of those random variables.
 
 The bottleneck quantized autoencoder models consist of an encoder, a decoder, and a bottleneck quantizer. The encoder learns a deterministic mapping and outputs the \emph{latent code} $\mathbf{z_e} = g_{\text{enc}}(\mathbf{x})$, where $\mathbf{x} \in \mathcal{X} = \mathbb{R}^{D}$ denotes the input datapoint and $\mathbf{z_e} \in \mathbb{R}^{d}$. The latent code $\mathbf{z_e}$ can be seen as an efficient representation of the input $\mathbf{x}$, such that $d \ll D$. The latent code $\mathbf{z_e}$ is then fed into the bottleneck quantizer $Q(\cdot)$. The quantizer partitions the latent space into $K$ clusters characterized by the codebook $\mathcal{M}  = \{\mu^{(1)}, \cdots, \mu^{(K)}\}$. The latent code $\mathbf{z_e}$ is quantized to one of the $K$ codewords by the nearest neighbor search
 \begin{equation}
 \label{eq:assgn1}
 \mathbf{z_q} = Q(\mathbf{z_e}) = \mu^{(\mathbf{c})}, \ \text{where} \ \mathbf{c} = \argmin_k\|\mathbf{z_e}-\mu^{(k)}\|_2.
 \end{equation} 
 The output $\mathbf{z_q}$ of the quantizer is passed as input to the decoder. The decoder then reconstructs the input datapoint $\mathbf{x}$. 
\subsection{Bottleneck Vector Quantizer as a Latent Parameter Estimator}
In this section, we show that the embedded quantizer can be interpreted as a parameter estimator for the latent distribution with a discrete parameter space. In a vanilla VAE perspective, the encoder outputs the parameters of its latent distribution. The input of the VAE decoder is sampled from the latent distribution that is parameterized by the output of the VAE encoder. For bottleneck quantized autoencoders, the embedded quantizer creates a discrete parameter space for the model posterior. The nearest neighbor quantization effectively makes the autoencoder a generative model of Gaussian mixtures with a finite number of components, where the components are characterized by the codewords of the quantizer \cite{Henter2018}. In contrast, the vanilla VAE is equivalent to a mixture of an infinite number of Gaussians as the latent parameter space is continuous. Furthermore, the variational inference model of the bottleneck quantized autoencoder can be expressed as
	\begin{align}
	q(z|x) &= \sum_{k = 1}^Kq\left(z|z_q = \mu^{(k)}\right)q\left(z_q = \mu^{(k)}|x\right) \\
	&= q\left(z|z_q = Q(g_{\text{enc}}(x))\right)\delta\left(z_q = Q(g_{\text{enc}}(x))\right),
	\end{align}
	where $z$ is the latent variable and $\delta(\cdot)$ is the indicator function. 

As a result, the decoder input $z_q$ can be seen as the estimated parameters of the latent distribution with the discrete parameter space that is characterized by the codebook of the quantizer. That is, the decoder of the bottleneck quantized autoencoders takes the estimated parameter of the latent distribution and recover the parameters of the data generating distribution of the observed variables $\mathbf{x}$. No sampling of $\mathbf{z}$ from the latent distribution is needed during the training of vector-quantized autoencoder models.
\subsection{Vector Quantizer as a Regularizer}
We showcase that the added quantizer between encoder and decoder acts also as a regularizer on the latent codes that fosters similarity-preserving mappings at the encoder for Gaussian observation models. We use visual examples to show that the embedded bottleneck quantizer can enforce the encoder output to share a constrained coding space such that learned latent representations preserve the similarity relations of the data space. We argue that this is one of the reasons that bottleneck quantized autoencoders can learn meaningful representations. 

Assume that we have a decoder with infinite capacity. That is, the decoder is so expressive that it can produce a precise reconstruction of the input of the model without any constraints on the latent codes. As a result, the encoder can map the input to the latent codes in an arbitrary fashion while keeping a low reconstruction error (See Fig. \ref{fig: q1}).

With the quantizer inserted between the encoder and decoder, the encoder can only map the input to a finite number of representations in the latent space. For example, in Fig. \ref{fig: q2}, we insert a codebook with two codewords. If we keep the encoder mapping the same as Fig. \ref{fig: q1}, then, both blue and purple nodes in the latent space will be represented by the blue node in the discrete latent space due to the nearest neighbor search. In this case, the optimal reconstruction of the blue and purple nodes at the input will be the green node at the output. This is obviously not the optimal encoder mapping with respect to the reconstruction error. Instead, the more efficient mapping of the encoder is to map similar data points to neighboring points in the latent space (See Fig. \ref{fig:sub3}).
\begin{figure}[!ht]
\captionsetup[subfigure]{justification=centering}
\centering
	\begin{subfigure}{.45\textwidth}
	\centering
	\includegraphics[width=.8\linewidth]{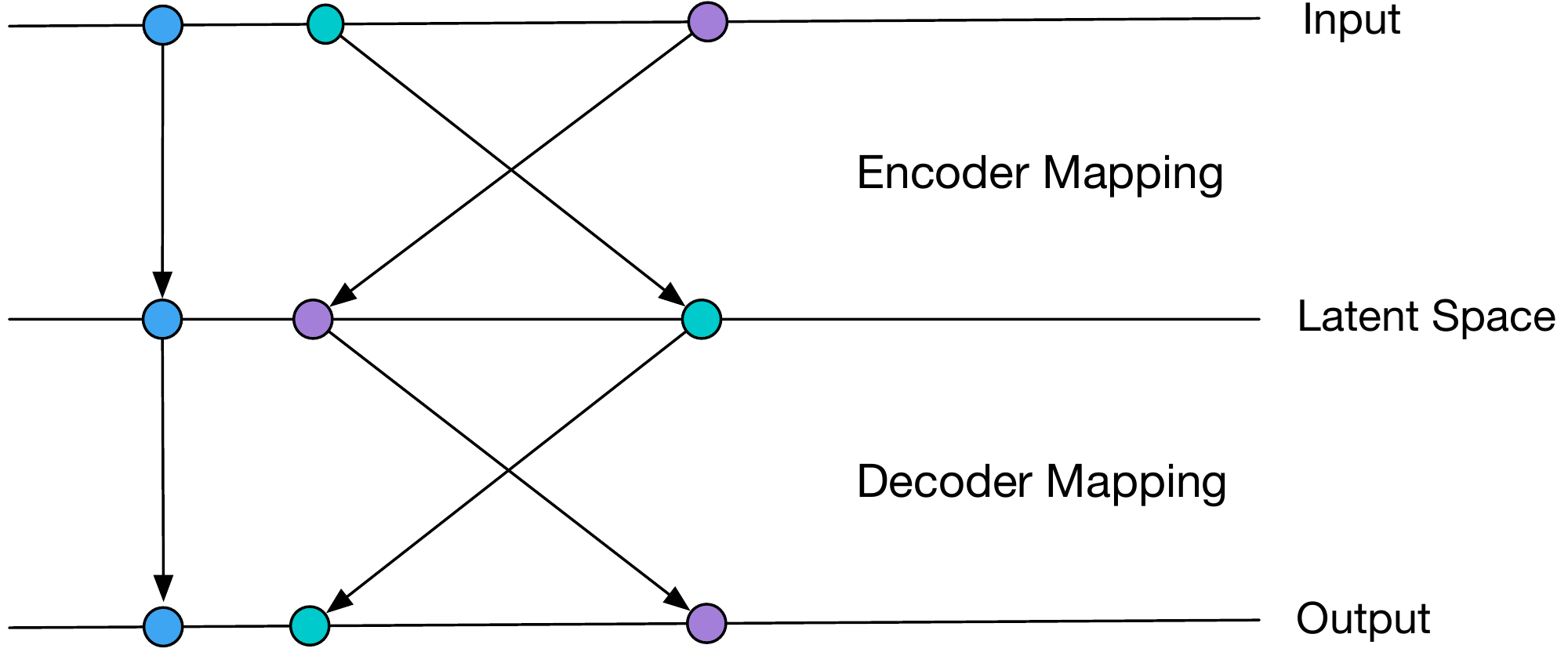}
	\caption{Vanilla autoencoders}
		\label{fig: q1}
	\end{subfigure}
\begin{subfigure}{.45\textwidth}
\centering
	\includegraphics[width=.8\linewidth]{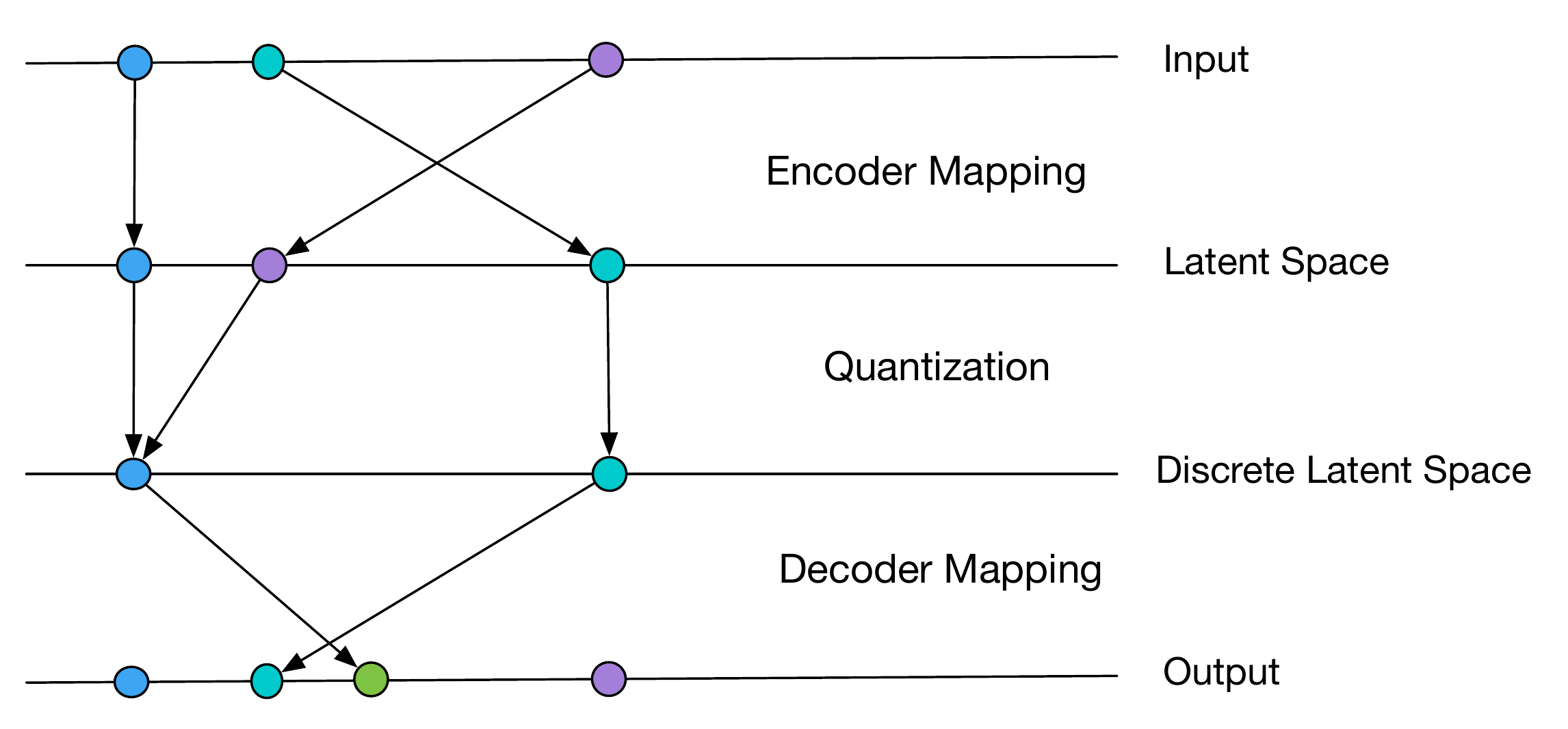}
	\caption{Autoencoders with quantized bottleneck}
		\label{fig: q2}
\end{subfigure}
\begin{subfigure}{.45\textwidth}
\centering
\includegraphics[width=.8\linewidth]{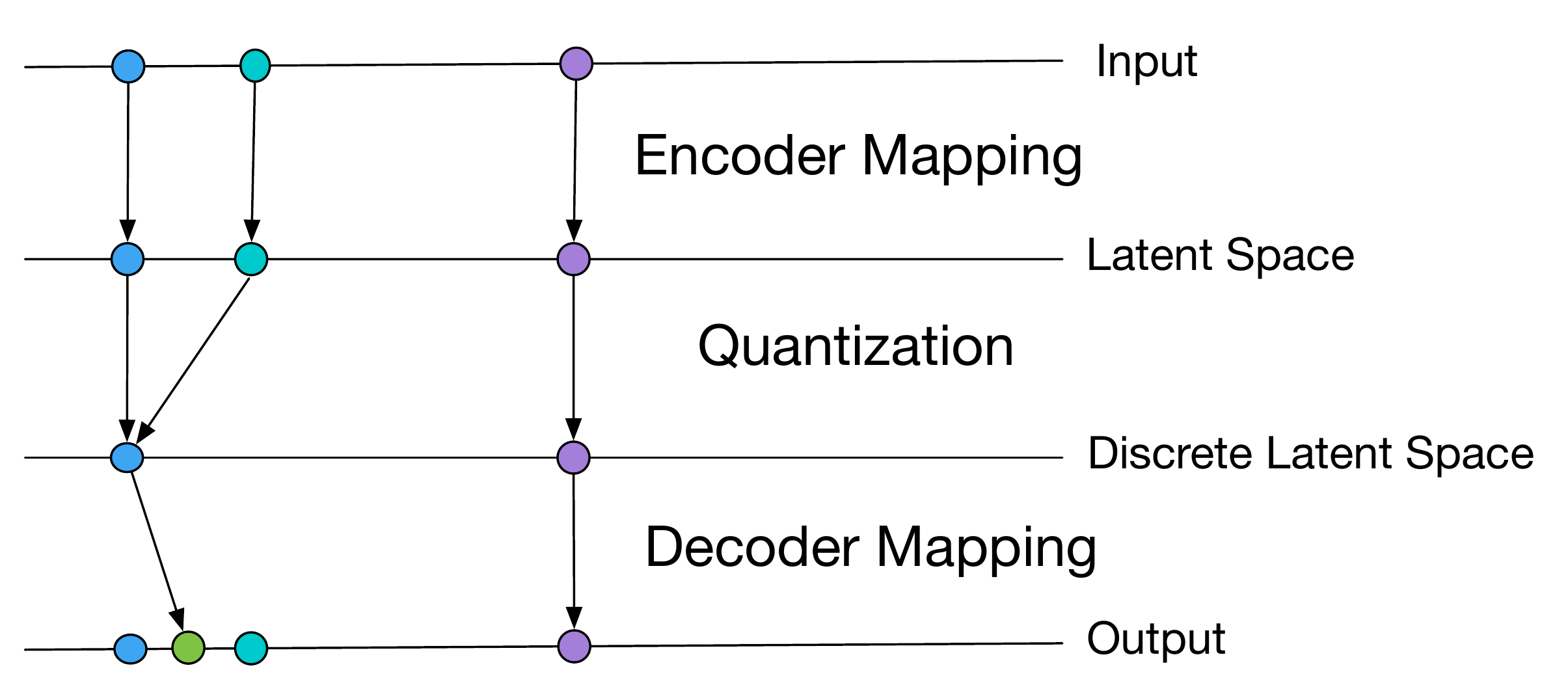}
\caption{The quantizer enforces a\\ similarity-preserving mapping at the encoder}
\label{fig:sub3}
\end{subfigure}
\caption{The quantizer behaves as a regularizer that encourages a similarity-preserving mapping} at the encoder.
\label{fig:test}
\end{figure}

However, we can also observe that the bottleneck quantized autoencoders inevitably hurts the reconstruction due to the limited choices of discrete latent representations. That is, the number of possible reconstructions produced by a decoder is limited by the size of the codebook. This is insufficient for many datasets who have a large number of classes. In our proposed soft VQ-VAE, it increases the expressiveness of the latent representations by using a Gaussian mixture model and the decoder input is a convex combination of the codewords.

\section{Soft VQ-VAE}
\label{noise_injection}
\subsection{Noisy Latent Codes}
Injecting noise on the input training data is a common technique for learning robust representations \cite{bengio:AAAI:17}. In our paper, we extend this practice by adding noise to the latent space such  that the models are exposed to new data samples. We note that this practice is also applied in \cite{DBLP:journals/corr/abs-1811-07557} and \cite{pmlr-v32-rezende14} to improve the generalization ability of their models.

We propose to add white noise $\bm{\epsilon}$ with zero mean and finite variance on the encoder output $\mathbf{z_e'} = \mathbf{z_e} + \bm{\epsilon}$, where $\bm{\epsilon} \in \mathbb{R}^d$.  We assume that the added noise variance $\sigma_\epsilon$ is unknown to the model. Instead, we view the noisy latent code is generated from a mixture model with $K$ components
\begin{equation}
\label{eq:gmm_generative}
p(z_e') = \sum_{k = 1}^K p\left(z_q = \mu^{(k)}\right)p\left(z_e'|z_q = \mu^{(k)}\right), 
\end{equation}
where $z_q \in \mathcal{M}$.

We let the conditional probability function of $\mathbf{z_e'}$ given one of the codewords $\mu^{(k)}$ to be a multivariate Gaussian distribution $\mathcal{N}\left(\mu^{(k)}, I^{(k)}\right)$ 
\begin{equation}
\label{eq:componentGaussian}
\resizebox{.97\hsize}{!}
{$p\left(z_e'|\mu^{(k)}\right) = \frac{\exp\left(\left(-\frac{1}{2}(z_e'-\mu^{(k)}\right)^T I^{(k)-1}\left(z_e'-\mu^{(k)}\right)\right)}{\sqrt{(2\pi)^d|I^{(k)}|}}$},
\end{equation}
where the $k$-th codeword $\mu^{(k)}$ is regarded as the mean of the Gaussian distribution of the $k$-th component, $I^{(k)} = \sigma_k^2I$ and $\sigma_k$ is the standard deviation of the $k$-th component.
\subsection{Bayesian Estimator}
 We add a Bayesian estimator after the noisy latent codes in the bottleneck stage of the autoencoder. The aim is to estimate the parameters of the latent distribution from noisy observations.
 The Bayesian estimator is optimal with respect to the mean square error (MSE)  criterion and is defined as the mean of the posterior distribution,
\begin{equation}
\label{eq: estimator}
\hat{z}_q = \mathbb{E}[\mathbf{z_q}|z_e'] = \sum_{k = 1}^{K}\mu^{(k)}p\left(\mu^{(k)}|z_e'\right).
\end{equation}
Using Bayes' rule, we express the conditional probability $p\left(\mu^{(k)}|z_e'\right)$  as
\begin{align}
p\left(\mu^{(k)}|z_e'\right) = \frac{p\left(\mu^{(k)}\right)p\left(z_e'|\mu^{(k)}\right)}{p(z_e')},
\end{align}
where we assume an uninformative prior for the codewords $p\left(\mu^{(k)}\right) = \frac{1}{K}$ as there is no preference for single codeword. The conditional probability $p\left(z_e'|\mu^{(k)}\right)$ is given in (\ref{eq:componentGaussian}) and the marginal distribution of the noisy observation is given by marginalizing out the finite codebook in (\ref{eq:gmm_generative}).

Compared to the hard assignment of the VQ-VAE, we can see that we are equivalently performing a soft quantization as the noisy latent code is assigned to a codeword with probability $p\left(\mu^{(k)}|z_e'\right)$. The output of the estimator is a convex combination of all the codewords in the codebook. The weight of each codeword is determined similar to a radial basis function kernel where the value is inversely proportional to the L$2$ distance between $\mathbf{z_e'}$ and a codeword with component variance as the smoothing factor.
Fig. \ref{fig:estimator} shows the described soft VQ-VAE.
\begin{figure}[!ht]
	\centering
	\includegraphics[width=.45\textwidth]{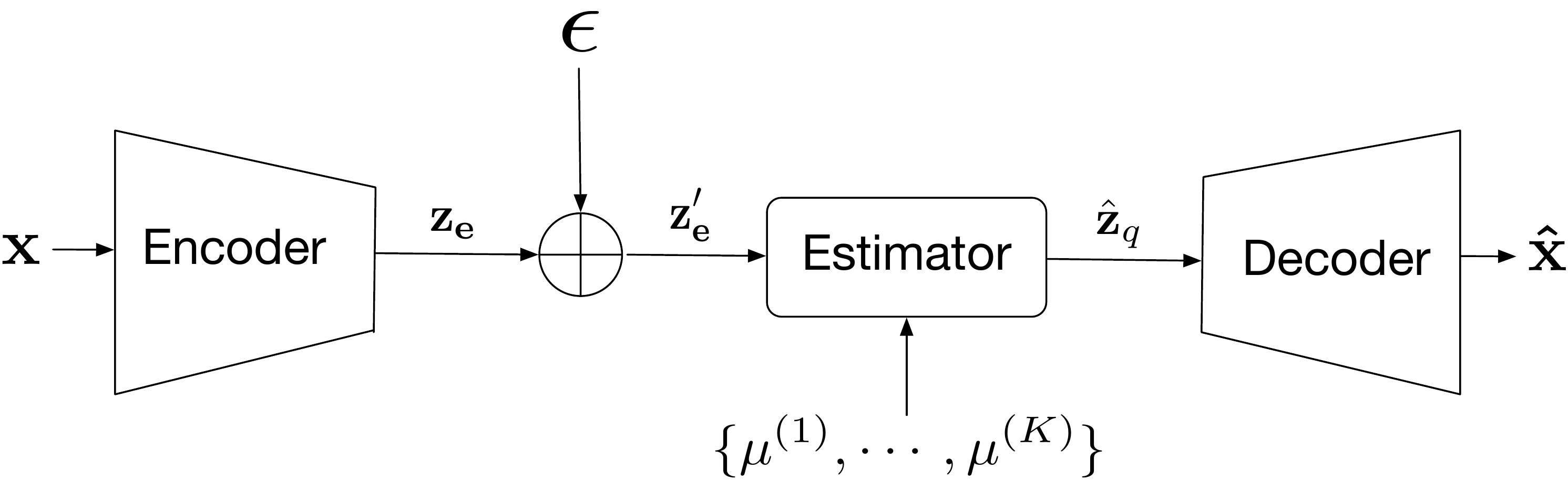}
	\caption{Description of the soft VQ-VAE.}
	\label{fig:estimator}
\end{figure}
\subsection{Optimal Estimator} 
In this section, we show that our added Bayesian estimator is optimal with respect to the model evidence of the bottleneck quantized autoencoders with noisy latent codes. The maximum likelihood principle of generative models chooses the model parameters that maximize the likelihood of the training data \cite{DBLP:journals/corr/Goodfellow17}. Similarly, we can decompose the marginal log-likelihood of the model distribution as the model ELBO plus the KL divergence between the variational distribution and the model posterior \cite{8588399},
\begin{equation}
	\log p(x) = \mathbb{E}_{q(z)} \left[\log\left(\frac{p(x, z)}{q(z)}\right)\right] + \text{KL}(q(z)\|p(z|x)),
\end{equation}
where $p(z|x)$ is the model posterior and $q(z)$ is the variational latent distribution that regularizes the model posterior. The maximization of the model ELBO can be seen as searching for the optimal latent distribution $q$ within a variational family $\mathcal{Q}$ that approximates the true model posterior $p(z|x)$. Given a uniform distribution $\hat{p}(x)$ over the training dataset, we can obtain the optimal estimation of the latent distribution: 
\begin{align}
q^* &= \mathbb{E}_{\hat{p}({x})}\argmax_{q \in \mathcal{Q}}\left[\mathbb{E}_{q(z)} \left[ \log\left(\frac{p(x,z)}{q(z)}\right)\right]\right] \\
\label{eq:elboalter}
&= \mathbb{E}_{\hat{p}(x)}[\log p(x)] - \mathbb{E}_{\hat{p}(x)}\argmin_{q \in \mathcal{Q}}[\text{KL}(q(z) \|p(z|x))]\\
\label{eq:klgoal}
&=\mathbb{E}_{\hat{p}(x)}\argmin_{q \in \mathcal{Q}}[\text{KL}(q(z) \|p(z|x))],
\end{align}
Since the first term of (\ref{eq:elboalter}) is irrelevant with respect to the approximated latent distribution, the maximization of the model ELBO becomes equivalent to finding the latent distribution that minimizes the KL divergence to the model posterior distribution in (\ref{eq:klgoal}).

For bottleneck quantized autoencoders, the embedded quantizer enforces the model posterior $p(z|x)$ to be the unimodal distribution centered on one of the codewords $\mu \in \mathcal{M}$. In our noisy model, we perturb the encoder output $\mathbf{z_e}$ by random noise. We assume that the noise variance is unknown to the model and the parameter cannot be determined by performing a nearest neighbor search on the noisy bottleneck representation $\mathbf{z_e'}$. Instead, the introduced Bayeisan estimator (\ref{eq: estimator}) outputs a convex combination of the codewords. In the following Theorem, we show that our proposed Bayesian estimator outputs the parameters of the optimal latent distribution for the quantized bottleneck autoencoder models under the condition that the latent distribution belongs to the Gaussian family. 
\begin{theorem}
	\label{propostion1}
	Let $\mathcal{Q}$ be the set of Gaussian distributions with associated parameter space $\Omega$. Based on the described noisy model, for one datapoint, the estimator $f$: $\mathcal{X} \rightarrow \Omega$ that outputs the parameters of the optimal $q^* \in \mathcal{Q}$ is given by
	\begin{equation}
	\label{eq: optiesti}
	\hat{z}_q = f(x) = \sum_{k = 1}^{K}\mu^{(k)}p\left(\mu^{(k)}|z_e'\right).
	\end{equation}
\end{theorem}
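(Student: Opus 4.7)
The plan is to set up the KL divergence minimization explicitly and derive optimality conditions on the Gaussian parameters. First, I would identify the model posterior induced by the noisy bottleneck. From the Gaussian mixture generative model in (\ref{eq:gmm_generative}) together with the component densities in (\ref{eq:componentGaussian}), the posterior over the latent given the observation is itself a Gaussian mixture, with mixing weights $p(\mu^{(k)}|z_e')$ computed by Bayes' rule with the uniform prior, and component means $\mu^{(k)}$. Next, I would parameterize $q \in \mathcal{Q}$ as a Gaussian $\mathcal{N}(m, \Sigma)$ whose parameters $(m, \Sigma) \in \Omega$ are to be determined, and substitute into the objective (\ref{eq:klgoal}).

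Then I would exploit the fact that within the Gaussian family, the distribution that minimizes KL divergence to a target can be characterized through its sufficient statistics. For any target posterior whose first moment equals $\bar{\mu} = \sum_{k=1}^{K} \mu^{(k)} p(\mu^{(k)}|z_e')$, the optimality condition for the Gaussian mean reduces to $m^{*} = \bar{\mu}$. Concretely, I would differentiate the KL divergence with respect to $m$ and set the gradient to zero; recognizing that the target posterior is the Gaussian mixture described above, the first-order condition yields $m^{*} = \bar{\mu}$, which is exactly the estimator claimed in (\ref{eq: optiesti}). Finally, I would identify this optimal mean with the decoder input $\hat{z}_q$, verifying that the Bayesian estimator (\ref{eq: estimator}) and the M-projection of the mixture posterior onto $\mathcal{Q}$ coincide.

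The hardest part will be handling the interaction between a unimodal Gaussian $q(z)$ and the multimodal mixture posterior $p(z|x)$. The integral $\int q(z) \log p(z|x)\, dz$ is not closed-form when $p(z|x)$ is a mixture, so the optimality argument cannot proceed by direct computation of the KL gradient term by term. Instead, I would lean on the exponential-family moment-matching interpretation: the Gaussian parameter minimizing the KL divergence to a target makes the expected sufficient statistics under $q$ agree with those under the target, and the first moment of the mixture posterior is precisely $\bar{\mu}$ by construction. A secondary difficulty is reconciling the reverse-KL direction used in (\ref{eq:klgoal}) with the moment-matching conclusion; this may require either invoking an assumption on the component overlap, or reinterpreting the projection as an M-projection so that the weighted-mean minimizer is the unique stationary point consistent with the Gaussian family.
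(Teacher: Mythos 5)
Your proposal sets up a different optimization problem than the paper, and the difference is what makes your argument break down at exactly the place you flag as the ``hardest part.'' You treat the model posterior $p(z|x)$ as a single Gaussian \emph{mixture} and try to minimize $\mathrm{KL}(q \,\|\, p)$ where $p$ is that mixture. As you correctly note, this reverse-KL objective against a multimodal target has no closed form and is mode-seeking rather than mean-seeking; the optimizer is generally not the moment-matched Gaussian, and moment matching characterizes the forward KL (M-projection) $\mathrm{KL}(p\|q)$, not the reverse direction the theorem uses. Your suggested fixes (``invoke an assumption on component overlap'' or ``reinterpret as an M-projection'') would change the statement being proved, not close the gap.

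The paper avoids this issue by never forming the mixture posterior at all. It keeps $p(z|x)$ unimodal --- a single Gaussian centered on whichever codeword $z_q$ is in play --- and pushes the uncertainty about which codeword into an outer expectation over the noise. Concretely, $\mathbb{E}_{\hat p(x)}\mathbb{E}_{\hat p(\epsilon)}[\mathrm{KL}(q\|p(z|x))]$ is rewritten, using the injective-encoder assumption and the mixture decomposition of $\hat p(z_e, z_e')$, as a \emph{weighted sum of KL divergences to individual Gaussians}, $\frac{1}{K}\sum_k \hat p(z_e'|\mu^{(k)})\,\mathrm{KL}(q\|p(z|\mu^{(k)}))$. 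This is a fundamentally different objective from $\mathrm{KL}(q\|\sum_k \omega_k p_k)$: by convexity of KL in its second argument, the weighted sum upper-bounds the KL to the mixture, and unlike the latter it \emph{does} admit a closed-form minimizer. Each term $\mathrm{KL}(q\|p_k)$ is a Bregman divergence $d_A(\eta^{(k)},\eta)$ between natural parameters, and the minimizer of a $\omega$-weighted sum of Bregman divergences in the second argument is the $\omega$-weighted mean $\sum_k \omega_k \eta^{(k)}$. After normalizing $\omega_k$ to $p(\mu^{(k)}|z_e')$ and using the identity-covariance assumption so that $\eta^{(k)} = \mu^{(k)}$, the claimed estimator falls out. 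So the missing idea in your proposal is this decomposition step: interchange the expectation over noise with the KL evaluation so that the target stays a single exponential-family member, and then invoke the Bregman-mean lemma rather than any moment-matching argument.
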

\begin{proof}
	For the noisy setting, the expectation of the KL divergence between the model posterior $p(z|x)$ and the approximated $q$ is taken with respect to the empirical training distribution $\hat{p}(x)$ and the noise distribution $\hat{p}(\epsilon)$
	\begin{equation}
	\label{eq:setp1}
	\mathbb{E}_{\hat{p}(x)}\mathbb{E}_{\hat{p}(\epsilon)}[\text{KL}(q(z) \|p(z|x))].
	\end{equation}
	Since the encoder does not have activation functions in the output layer, we assume that the encoder neural network is a deterministic injective function over the empirical training set such that $\hat{p}(x) = \hat{p}(z_e)$. Also, the injected noise is independent of $z_e$, we can express the probability distribution of the training data and noise as the following chain of equalities: \\
	\begin{equation}
	\hat{p}(x)\hat{p}(\epsilon) =\hat{p}(z_e)\hat{p}(\epsilon) =  \hat{p}(z_e, \epsilon) = \hat{p}(z_e, z_e + \epsilon) =  \hat{p}(z_e, z_e')
	\end{equation}
	The joint probability $\hat{p}(z_e, z_e')$ can be further decomposed as 
	\begin{align}
	\hat{p}(z_e, z_e') &= \hat{p}(z_e)\hat{p}(z_e'|z_e)\\
	&= \hat{p}(z_e)\sum_{k =1}^{K}\hat{p}\left(z_q = \mu^{(k)}|z_e\right)\hat{p}\left(z_e'|z_q = \mu^{(k)}\right) \\
	\label{eq:setp2}
	& = \hat{p}(z_e)\frac{1}{K}\sum_{k = 1}^{K}\hat{p}\left(z_e'|\mu^{(k)}\right).
	\end{align}
	where (\ref{eq:setp2}) follows from that $z_e$ is considered as unobservable in the model (see Fig. \ref{fig:markov_soft}), and thus provides no information about $\mu^{(k)}$ such that the conditional probability of $\mu^{(k)}$ given $z_e$ is equal to the prior of the codewords $\frac{1}{K}$.
	\begin{figure}[!ht]
		\centering
		\includegraphics[width = .34\linewidth]{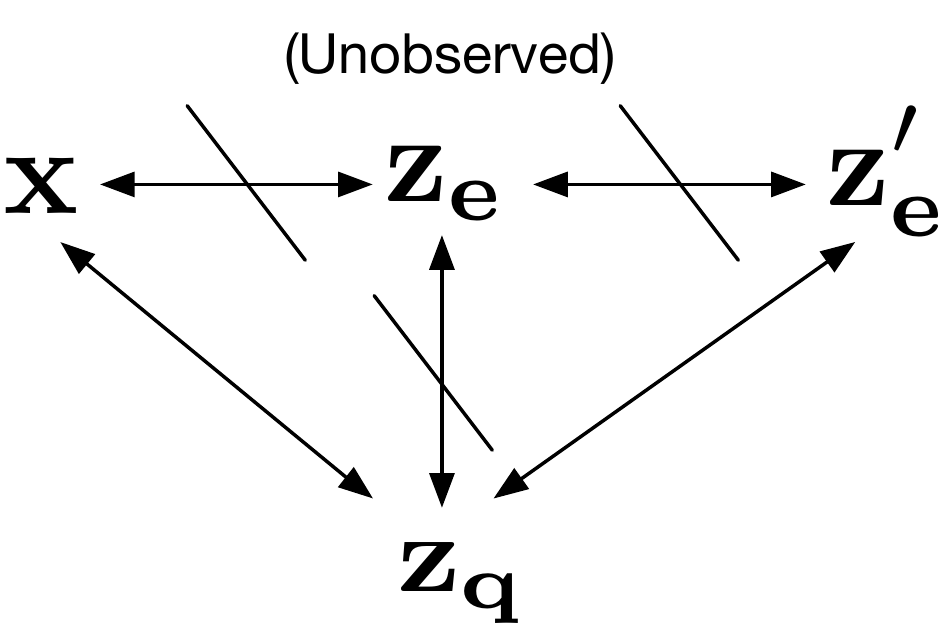}
		\caption{The relation of variables in the soft VQ-VAE model. The symbol $\nleftrightarrow$ is used to indicate that we cannot directly use paths that connected to the unobserved $\mathbf{z_e}$ for probabilistic inference.}
		\label{fig:markov_soft}
	\end{figure}
    
    Combining the model posterior of bottleneck quantized autoencoders and the above derivations, we can reexpress (\ref{eq:setp1}) as
	\begin{align}
	&\mathbb{E}_{\hat{p}(z_e, z_e')}[\text{KL}(q(z) \|p(z|z_q)]\\
	=&\mathbb{E}_{\hat{p}(z_e)}\frac{1}{K}\sum_{k = 1}^{K}\mathbb{E}_{\hat{p}\left(z_e'|\mu^{(k)}\right)}\left[\text{KL}(q(z)\|p\left(z|\mu^{(k)}\right)\right],
	\end{align}
	where the true model posterior has $p(z|x) = p(z|z_q)$.
	
	Therefore, for each datapoint $x$, the optimization problem with respect to the latent distribution $q$ (\ref{eq:klgoal}) for the noisy setting becomes
	\begin{align}
	\label{eq:berg_obj}
	&\min_{q \in \mathcal{Q}}\frac{1}{K}\sum_{k = 1}^K\mathbb{E}_{\hat{p}\left(z_e'|\mu^{(k)}\right)}\left[\text{KL}\left(q(z)\|p\left(z|\mu^{(k)}\right)\right)\right]\\
    =& \min_{q \in \mathcal{Q}}\frac{1}{K}\sum_{k = 1}^K \hat{p}\left(z_e'|\mu^{(k)}\right)\text{KL}\left(q(z)\|p\left(z|\mu^{(k)}\right)\right).
	\end{align}
	
	Note that the KL divergence between two exponential family distributions can be represented by the Bregman divergence $d_A(\cdot)$ between the corresponding natural parameters $\eta'$ and $\eta$ as 
	\begin{align}
	\text{KL}(p_{\eta'}\|p_{\eta}) &=  d_A(\eta, \eta')\\
	&=-A(\eta')+A(\eta)-\nabla A(\eta)^T(\eta'-\eta) ,
	\end{align}
	where $A(\cdot)$ is the log-partition function for the exponential family distribution. Furthermore, it has been shown that the minimizer of the expected Bregman divergence from a random vector is its mean vector \cite{bregman}. Therefore, we formulate (\ref{eq:berg_obj}) as a convex combination of the KL divergence
	\begin{align}
	\label{eq: convex_obj}
	&\argmin_{q \in \mathcal{Q}} \sum_{k = 1}^ K\omega_k \text{KL}\left(q(z)\| p\left(z|\mu^{(k)}\right)\right) \\
	=&\argmin_{\eta}\sum_{k = 1}^K \omega_k d_A(\eta^{(k)}, \eta),
	\end{align}
	where $\omega_k = \frac{1}{VK}\hat{p}\left(z_e'|\mu^{(k)}\right)$. $V = \sum_{k = 1}^K\hat{p}\left(z_e'|\mu^{(k)}\right)$ is the introduced normalization constant and the optimal solution of (\ref{eq:berg_obj}) is not affected. In addition, due to the normalization, $\omega_k$ becomes $p\left(\mu^{(k)}|z_e'\right)$. Then, the minimizer of (\ref{eq: convex_obj}) is given by the mean of $\eta^{(k)}$
	\begin{equation}
	\label{eq: minimizer}
	\eta =\sum_{k = 1}^{K}p\left(\mu^{(k)}|z_e'\right)\eta^{(k)}.
	\end{equation}
	
	The natural parameters for the multivariate Gaussian distribution with known covariance matrix is $\Sigma^{-1}\mu$. Since the $p(z|x)$ is the model posterior of the noiseless bottleneck quantized autoencoders, the covariance matrix is assumed to be the identity matrix for all components $\Sigma = I$. Therefore, we can recover the Bayesian estimator (\ref{eq: optiesti}) by substituting $\eta^{(k)}$ with $\mu^{(k)}$ in (\ref{eq: minimizer}), and the proof is complete.
\end{proof}
\section{Related Work}
\label{Background}
For extended work on VQ-VAE, \cite{roy:18} uses the Expectation Maximization algorithm in the bottleneck stage to train the VQ-VAE and to achieve improved image generation results. However, the stability of the proposed algorithm may require to collect a large number of samples in the latent space. \cite{Henter2018} gives a probabilistic interpretation of the VQ-VAE and recovers its objective function using the variational inference principle combined with implicit assumptions made by the vanilla VQ-VAE model.

Several works have studied the end-to-end discrete representation learning model with different incorporated structures in the bottleneck stages. \cite{theis:17:iclr} and \cite{Balle:17:iclr}  introduce scalar quantization in the latent space and optimize jointly the entire model for rate-distortion performance over a database of training images. \cite{agustsson:17:nips} proposes a compression model by performing vector quantization on the network activations. The model uses a continuous relaxation of vector quantization which is annealed over time to obtain a hard clustering. In \cite{agustsson:17:nips}, the softmax function is used to give a soft assignment to the codewords where a single smoothing factor is used as an annealing factor. In our model, we learn different smoothing factors for each component. \cite{sonderbypoole2017} introduces a continuous relaxation training of discrete latent-variable models which can flexibly capture both continuous and discrete aspects of natural data.

Various techniques for regularizing the autoencoders have been proposed recently. \cite{DBLP:journals/corr/abs-1807-07543} proposes an adversarial regularizer which encourages interpolation in the outputs and also improves the learned representation. \cite{NIPS2018_7692} interprets the VAEs as a amortized inference algorithm and proposed a procedure to constrain the expressiveness of the encoder. In addition, there is a increasing popularity of using information-theoretic principles to improve autoencoders. \cite{alemi:17:iclr}\cite{pmlr-v80-alemi18a} use the information bottleneck principle \cite{tishby:15:itw} to recover the objective of $\beta$-VAE and show that the KL divergence term in ELBO is an upper bound on the information rate between input and prior. \cite{8253482} is also inspired by the information bottleneck principle and  introduces the information dropout method to penalize the transfer of information from data to the latents. \cite{DBLP:journals/corr/abs-1811-07557} proposes to use encoder-decoder structures and inject noises to the bottleneck stage to simulate binary symmetric channels (BSC). By jointly optimizing the encoding and decoding processes, the authors show that the trained model not only can produce codes that have better performance for the joint source-channel coding problem but also that the noisy latents facilitate robust representation learning.

We also note that the practice of using a convex combination of codewords is similar to the attention mechanism \cite{NIPS2017_7181}. The attention mechanism is introduced to solve the gradient vanishing problem that models fail to learn the long-term dependencies of time series data. It can be viewed as a feed-forward layer that takes the hidden state of the at each time step as input and outputs the so-called context vector as the representation which is a weighted combination of the input hidden state vectors. 
\section{Experimental Results}
\subsection{Model Implementation}
We test our proposed model on datasets MNIST, SVHN and CIFAR-10. All the tested autoencoder models share the same encoder-decoder setting. For the models tested on the SVHN and CIFAR-10, we use convolutional neural networks (CNN) to construct the encoder and decoder. For the MNIST, we use multilayer perceptron (MLP) networks to construct encoder and decoder. All decoders follow a structure that is symmetric to the encoder.

The differences among the compared models are only in the bottleneck operation. The bottleneck operation takes the encoder output as its input, and its output is fed into the decoder. For VAE and information dropout models, the bottleneck input is two separate encoder output layers of $d$ units respectively. One layer learns the mean of the Gaussian distribution and the other layer learns the log variance. The reparameterization trick or the information dropout technique is applied to generate samples for the latent distribution. For the VQ-VAE, the bottleneck performs a nearest neighbor search on the encoder output. Then, the quantized codeword is fed into the decoder. For the soft VQ-VAE, the bottleneck input is also two separate encoder output layers. One layer of size $d$ outputs the noiseless vector $z_e$. Another layer with size $K$ outputs the log variance of components. The noise injection is performed only on $z_e$ and the estimator uses the noisy samples and the variances of components for estimation. The baseline autoencoder directly feeds the encoder output to the decoder. 

 The soft VQ-VAE models are trained in a similar fashion as VQ-VAE. Specifically, the loss function for the soft VQ-VAE mode is
\begin{equation}
	\label{eq:loss1}
	\begin{split}
	L = -\log p(x|\hat{z}_q)+ \|\text{sg}\left(z_e'\right)-\hat{z}_q\|_2^2 + \beta\|z_e'- \text{sg}(\hat{z}_q)\|_2^2.
	\end{split}
\end{equation}
where $\text{sg}(\cdot)$ denotes the stop gradient operator and $\beta$ is a hyperparameter to encourage the encoder to commit to a codeword. The stop gradient operator is used to solve the vanishing gradient problem of discrete variables by separating the gradient update of encoder-decoder and the codebook.
The $\text{sg}(\cdot)$ outputs its input when it is in the
forward pass, and outputs zero when computing gradients in the training process.
Specifically, the decoder input is expressed as $\hat{z}_q = z_e + \text{sg}(\hat{z}_q-z_e)$ such that the gradients are copied from the decoder input to the encoder output.
\subsection{Training Setup}
  For the models tested on the CIFAR-10 and SVHN datasets, the encoder consists of $4$ convolutional layers with stride $2$ and filter size $3 \times 3$. The number of channels is doubled for each encoder layer. The number of channels of the first layer is set to be $64$. The decoder follows a symmetric structure of the encoder. For MINST dataset, we use multilayer perceptron networks (MLP) to construct the autoencoder. The dimensions of dense layers of the encoder and decoder are $D$-$500$-$500$-$2000$-$d$ and $d$-$2000$-$500$-$500$-$D$ respectively, where $d$ is the dimension of the learned latents and $D$ is the dimension of the input datapoints. All the layers use rectified linear units (ReLU) as activation functions. 
  
  We use the Glorot uniform initializer \cite{pmlr-v9-glorot10a} for the weights of encoder-decoder networks. The codebook is initialized by the uniform unit scaling. All models are trained using Adam optimizer \cite{kingma:15:iclr} with learning rate 3e-4 and evaluate the performance after $40000$ iterations with batch size $32$. Early stopping at $10000$ iterations is applied by soft VQ-VAE on SVHN and CIFAR-10 datasets.  
\subsection{Visualization of Latent Representation}
In this section, we use t-SNE \cite{tSNE} to visualize the latent representations that have been learned by different autoencoder models, and examine their similarity-preserving mapping ability.
First, we train autoencoders with a $60$-dimensional bottleneck on the MNIST dataset. After the training, we feed the test data into the trained encoder to obtain the latent representation of the input data. The $60$-dimensional latent representations are projected into the two-dimensional space using the t-SNE technique. In Fig. \ref{fig:latent}, we plot the two-dimensional projection of the bottleneck representation $z_e$ of the trained models with different bottleneck structures. All autoencoder models are trained to have similar reconstruction quality. It is shown that the latent representation of the soft VQ-VAE preserves the similarity relations of the input data better than the other models.
\begin{figure}
	\centering
	\begin{subfigure}[b]{.3\textwidth}
		\centering
		\includegraphics[width=\textwidth]{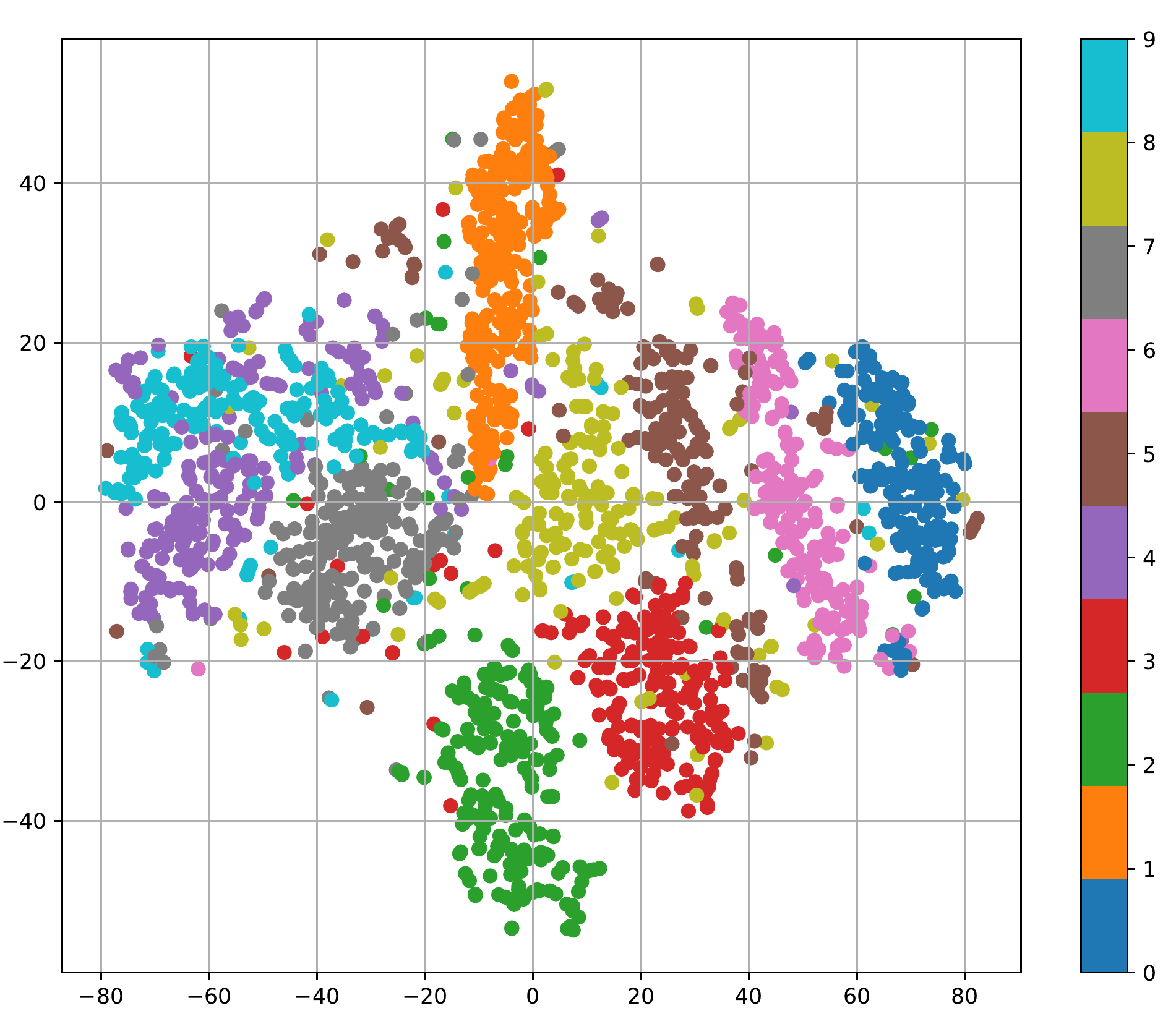}
		\caption[]{Autoencoder: $\mathbf{z_e}$.} 
		\label{fig:latents1}
	\end{subfigure}
	\begin{subfigure}[b]{.3\textwidth}
		\centering
		\includegraphics[width=\textwidth]{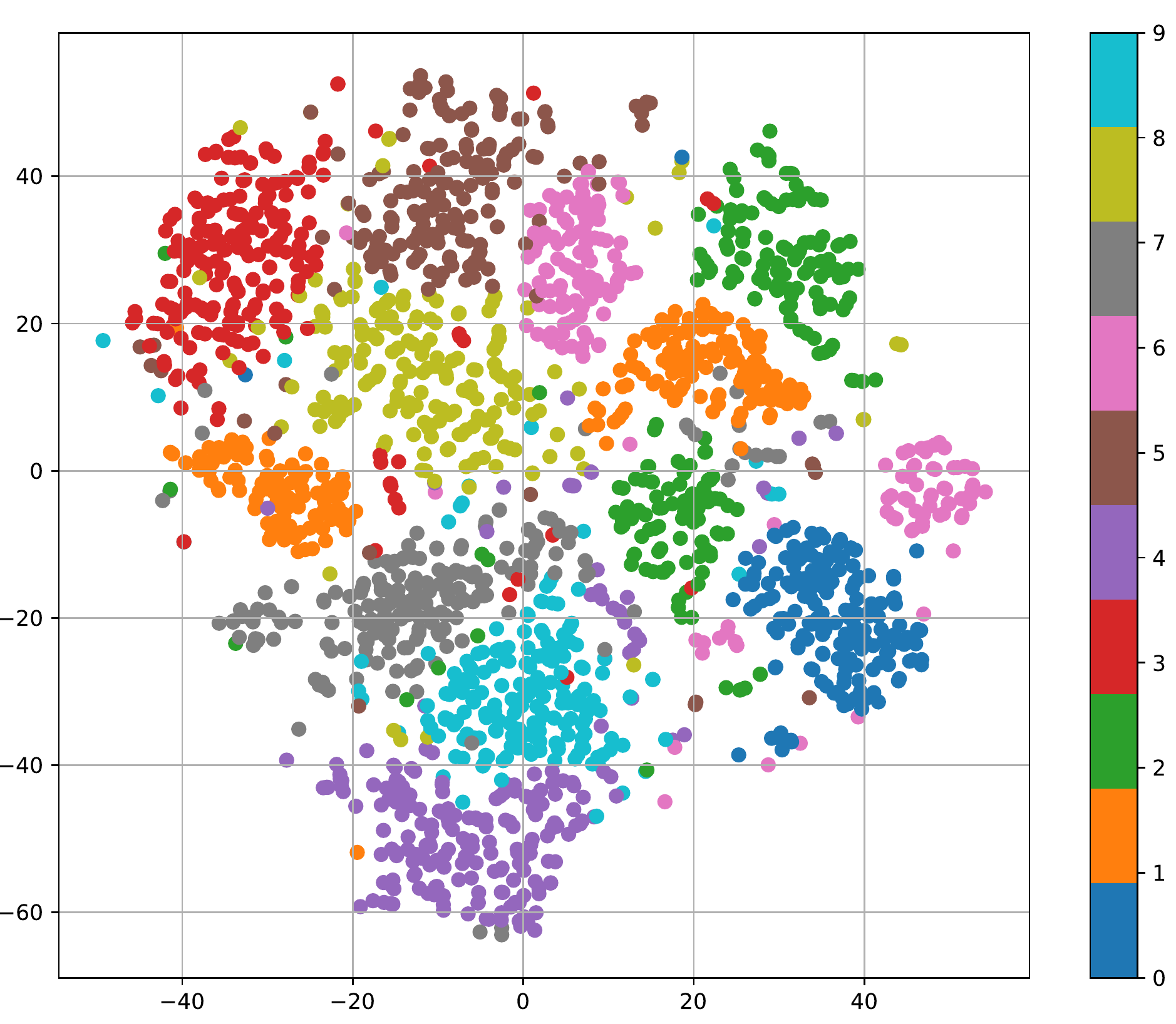} 
		\caption[]{VAE: $\mathbf{z_e}$.}%
		\label{fig:latents2}
	\end{subfigure}
	\begin{subfigure}[b]{.3\textwidth}
	\centering
	\includegraphics[width=\textwidth]{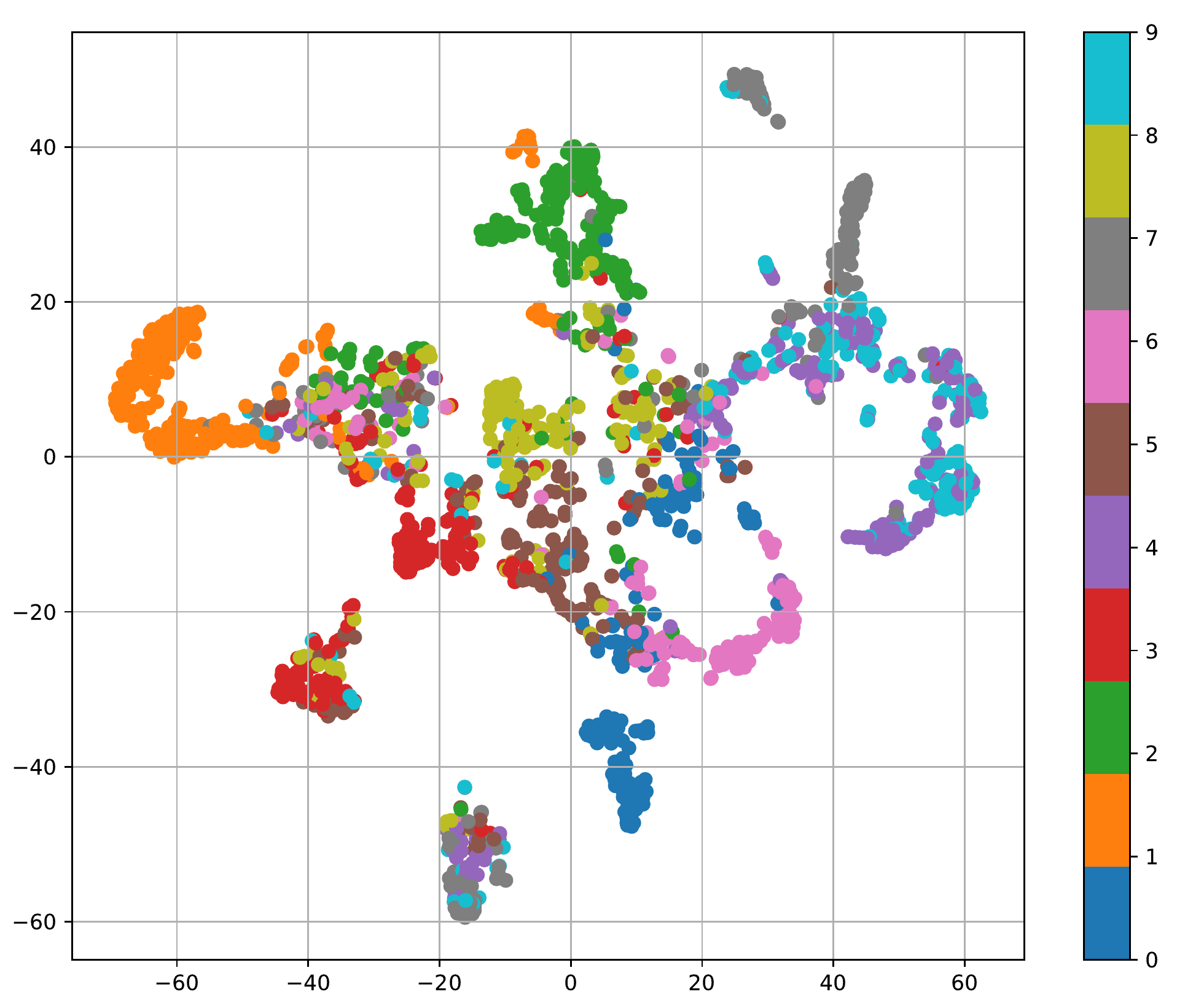}
	\caption[]{VQ-VAE: $\mathbf{z_e}$.} 
	\label{fig:latents3}
	\end{subfigure}
	\begin{subfigure}[b]{.3\textwidth}
		\centering
		\includegraphics[width=\textwidth]{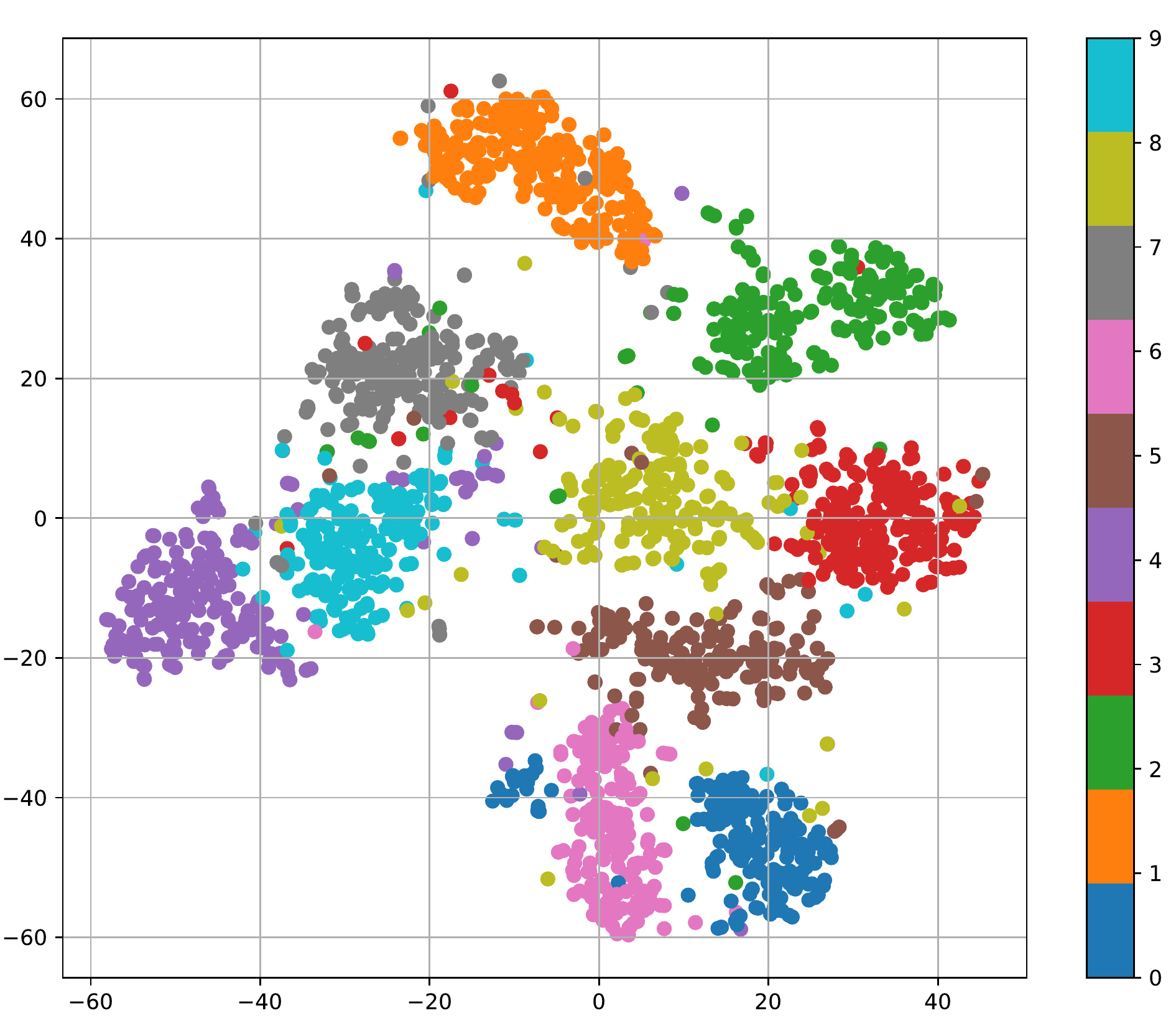} 
		\caption[]{soft VQ-VAE: $\mathbf{z_e}$.}%
	\label{fig:latents4}	
\end{subfigure}
\caption{Two-dimensional learned representations \\of MNIST. Each color indicates one digit class.}
\label{fig:latent}
\end{figure}
 \subsection{Representation Learning Tasks}
 \label{sb: downstreamtasks}
 We test our learned latent representation $z_e$ on K-means clustering and single-layer classification tasks as \cite{DBLP:journals/corr/abs-1807-07543}. The justification of these two tests is that if the learned latents can recover the hidden structure of the raw data, they should become more amiable to the simple classification and clustering tasks. We first train models using the training set. Then we use the trained model to project the test set on their latent representations and use them for downstreaming tasks. 
 
 For the K-means clustering, we use $100$ random initializations and select the best result. The clustering accuracy is determined by Hungarian algorithm \cite{xie:16:icml}, which is a one-to-one optimal linear assignment (LS) matching algorithm between the predicted labels and the true labels. We also test the clustering performance using the normalized mutual information (NMI) metric for the MNIST dataset \cite{Somvae} \cite{DBLP:journals/corr/abs-1801-07648}. The NMI is defined as NMI $(\mathbf{y}, \hat{\mathbf{y}}) = \frac{2I(\mathbf{y}, \hat{\mathbf{y}})}{H(\mathbf{y}) + H(\hat{\mathbf{y}})}$, where $\mathbf{y}$ and $\hat{\mathbf{y}}$ denote the true labels and the predicted labels, respectively. $I(\mathbf{y}, \hat{\mathbf{y}})$ is the mutual information between the predicted labels and the true label. $H(\cdot)$ is the entropy. For the classification tasks, we use a fully connected layer with a softmax function on the output as our classifier. The single-layer classifier is trained on the latent representation of the training set and is independent of the autoencoders' training.
\begin{table}[!ht]
	\caption{Accuracy of downstream tasks of MNIST.}
	\centering
	\resizebox{.95\columnwidth}{!}{
	\begin{tabular}{llll}
		\toprule
	    &\multicolumn{3}{c}{MNIST, $d = 64$} \\
		Model&Clustering&Clustering (NMI)&Classification\\
		\midrule
		Raw Data&\hfil$55.17$&\hfil$0.5008$&\hfil$92.44$\\
		Baseline Autoencoder&\hfil$52.61$&\hfil$0.5301$&\hfil$91.91$\\
		VAE&\hfil$56.44$&\hfil$0.5600$&\hfil$89.10$\\
		$\beta$-VAE ($\beta$ = 20)&\hfil$73.81$&\hfil$0.5760$&\hfil$91.10$\\
		Information dropout&\hfil$58.52$&\hfil$0.4979$&\hfil$91.11$\\
		VQ-VAE (K = 128) &\hfil$51.48$&\hfil$0.3541$ &\hfil$81.62$  \\
		Soft VQ-VAE (K=128)&\hfil$\textbf{77.64}$&\hfil$\textbf{0.7188}$&\hfil\textbf{93.54}\\
		\bottomrule
	\end{tabular}}
	\label{MNIST_table}
\end{table}
\begin{table}[!ht]
	\caption{Accuracy of downstream tasks of SVHN and CIFAR-10.}
	\centering
	\resizebox{.95\columnwidth}{!}{
	\begin{tabular}{llllll}
		\toprule
	    & \multicolumn{2}{c}{SVHN, $d = 256$}&\multicolumn{2}{c}{CIFAR-10, $d = 256$} \\
		Model&Clustering&Classification&Clustering&Classification\\
		\midrule
		Baseline Autoencoder  &\hfil$11.96$&\hfil$25.95$ &\hfil$21.73$&\hfil $40.92$\\
		VAE    & \hfil$13.58$ &\hfil$26.42$&\hfil$\textbf{24.12}$&\hfil $38.83$\\
		$\beta$-VAE ($\beta$ = 100) &\hfil $14.54$ &\hfil$49.62$&\hfil$22.80$&\hfil $36.91$\\
		Information dropout&\hfil$12.75$&\hfil$24.46$&\hfil$21.96$&\hfil$39.89$\\
		VQ-VAE (K = 512) &\hfil$12.96$ &\hfil$31.57$ &\hfil$20.30$&\hfil$33.51$ \\
		Soft VQ-VAE (K = 32)&\hfil$\textbf{17.68}$&\hfil\textbf{50.48}&\hfil23.83&\hfil\textbf{44.54}\\
		\bottomrule
	\end{tabular}}
	\label{sample-table}
\end{table}

We test $64$-dimensional latents for the MNIST and $256$ for SVHN and CIFAR-10. We compare different models where only the bottleneck operation is different. The results are shown in Table \ref{MNIST_table} and \ref{sample-table}. We report the means of accuracy results. The variances of all the results are within $1$ percent. 

For MNIST, soft VQ-VAE achieves the best accuracy for both clustering and classification tasks. Specially, it improves $25$ percent clustering accuracy for linear assignment metric and $36$ percent clustering accuracy for NMI metric when compared to the baseline autoencoder model. The performance of vanilla VQ-VAE suffers from the small size of the codebook ($K = 128$). All models show difficulties for directly learning from CIFAR-10 and SVHN data as they just perform better than random results in the clustering tasks. Soft VQ-VAE has the best accuracy for classification and has the second best for clustering. One reason for the poor performance of colored images may be that autoencoder models may need the color information to be dominant in the latent representation such that they can have a good reconstruction. However, the color information may not generally useful for clustering and classification tasks.

An interesting observation from the experiments is that we need to use a smaller codebook ($K = 32$) for the soft VQ-VAE for CIFAR-10 and SVHN when compared to MNIST ($K = 128$). According to our experiments, setting a larger $K$ for CIFAR-10 and SVHN will degrade the performance significantly. The potential reason is that we use CNN networks for CIFAR-10 and SVHN to have a better reconstruction of the colored images. Compared to the MLP networks used on MNIST, the CNN decoder is more powerful and can recover the encoder input from more cluttered latent representations. As a result, we need to reduce the codebook size to enforce a stronger regularization of the latents.

Beyond the discussed regularization effects, one intuition of the improved performance by soft VQ-VAE is that the embedded Bayesian estimator removes effects of adversarial input datapoint on the training. The adversarial points of the input data tend to reside in the boundary between classes. When training with ambiguous input data, the related codewords will receive a similar update. On the other hand, only one codeword receives a gradient update in the case of a hard assignment. This causes a problem. Ambiguous input is more likely estimated wrongly and the assigned codeword receives an incorrect update. Furthermore, the soft VQ-VAE model learns the variance for each Gaussian distribution. The learned variances control the smoothness of the latent distribution. The model will learn smoother distributions to reduce the effects of adversarial datapoints.
\section{Conclusion}
In this paper, we propose a regularizer that utilizes the quantization effects in the bottleneck. The quantization in the latent space can enforce a similarity-preserving mapping at the encoder. Our proposed soft VQ-VAE model combines aspects of VQ-VAE and denoising schemes as a way to control the information transfer. Potentially, this prevents the posterior collapse. We show the proposed estimator is optimal with respect to the bottleneck quantized autoencoder with noisy latent codes. Our model improves the performance of downstream tasks when compared to other autoencoder models with different bottleneck structures. Possible future directions include combining our proposed bottleneck regularizer with other advanced encoder-decoder structures \cite{DBLP:journals/corr/abs-1807-07543}\cite{DBLP:journals/corr/abs-1901-03416}. The source code of the paper is publicly available.\footnote{https://github.com/AlbertOh90/Soft-VQ-VAE/}
\section{Acknowledgements}
The authors sincerely thank Dr.~Ather Gattami at RISE and Dr.~Gustav Eje Henter for their valuable feedback on this paper.
We are also grateful for the constructive comments of the anonymous reviewers.
\small
\bibliographystyle{aaai}
\bibliography{fine3}
\end{document}